\newcommand{\R}{\mathbb{R}}
\newcommand{\N}{\mathbb{N}}
\newcommand{\wc}{\mathcal{C}}
\newcommand{\sly}{\bullet}
\newcommand{\sgn}{\mathrm{sgn}}
\newcommand{\sym}{\mathfrak{S}}
\newcommand{\rao}{\odot}
\newcommand{\tvec}{\mathrm{vec}}
\newcommand{\diag}{\mathrm{diag}}
\newcommand{\one}{\mathbbm 1}
\newcommand{\rank}{\mathrm{rank}}
\newcommand{\var}{\mathbb V}
\newcommand{\compl}{\mathsf c}
\newcommand{\dom}{\mathrm{dom}}
\newcommand{\img}{\mathrm{im}}
\newcommand{\intr}{\mathrm{int}}
\newcommand{\indA}{\mathcal A}
\newcommand{\D}{\mathcal{D}}
\DeclareRobustCommand{\loongrightarrow}{%
  \DOTSB\relbar\joinrel\relbar\joinrel\rightarrow
}
\DeclareRobustCommand{\looongrightarrow}{%
  \DOTSB\relbar\joinrel\relbar\joinrel\relbar\joinrel\relbar\joinrel\rightarrow
}
\DeclareRobustCommand{\loooongrightarrow}{%
  \DOTSB\relbar\joinrel\relbar\joinrel\relbar\joinrel\relbar\joinrel\relbar\joinrel\relbar\joinrel\relbar\joinrel\relbar\joinrel\rightarrow
}
\newtheorem{theorem}{Theorem}[section]
\newtheorem{lemma}[theorem]{Lemma}
\begin{document}

\title{Interpolation with deep neural networks with non-polynomial activations: necessary and sufficient numbers of neurons\thanks{This work was partially funded by a UBC DSI Postdoctoral Fellowship, NSERC Discovery Grant No. 2021-03677, and NSERC ALLRP 581098-22.}}
\author{Liam Madden\thanks{Department of Electrical and Computer Engineering, University of British Columbia, Vancouver, BC, Canada.}}
\maketitle

\begin{abstract}
The minimal number of neurons required for a feedforward neural network to interpolate $n$ generic input-output pairs from $\R^d\times \R^{d'}$ is $\Theta(\sqrt{nd'})$. While previous results have shown that $\Theta(\sqrt{nd'})$ neurons are sufficient, they have been limited to sigmoid, Heaviside, and rectified linear unit (ReLU) as the activation function. Using a different approach, we prove that $\Theta(\sqrt{nd'})$ neurons are sufficient as long as the activation function is real analytic at a point and not a polynomial there. Thus, the only practical activation functions that our result does not apply to are piecewise polynomials. Importantly, this means that activation functions can be freely chosen in a problem-dependent manner without loss of interpolation power.
\end{abstract}

\section{Introduction}

Neural networks were first conceived by Warren McCulloch and Walter Pitts in 1943 as a computational model inspired by neurons in the brain \citep{mcculloch1943logical}. Fifteen years later, Frank Rosenblatt developed the first perceptron, a two-layer neural network with Heaviside activation \citep{rosenblatt1958perceptron}. Today, neural networks are the building block for many machine learning models. In particular, they are one of the key ingredients in the modern transformer model, which has found great success in the realm of natural language processing \citep{vaswani2017attention}. While the original inspiration for neural networks came from biology, it is not clear that their success is at all related to the analogy with brains. In fact, the memory capacity perspective instead sees them as no more than simple mappings that are, nevertheless, expressive enough to interpolate data sets.

The memory capacity of a machine learning model is the largest $n$ such that it can interpolate $n$ generic input-output pairs~\citep{cover1965geometrical}, where by generic we mean that the set of exceptions lies on the zero set of a nontrivial real analytic function and therefore is measure zero and closed~\citep[Corollary 10]{gunning1965analytic}. We will consider the setting where inputs come from $\R^d$ and outputs come from $\R^{d'}$. As an example, a two-layer feedforward neural network (FNN) is a mapping $h\circ g\circ f:\R^d\to\R^{d'}$ where $f:\R^d\to\R^m$ is linear, $g:\R^m\to\R^m$ is an element-wise mapping, and $h:\R^m\to\R^{d'}$ is linear. While a linear mapping $\R^d\to\R^{d'}$ cannot interpolate generic data sets, it turns out that $h\circ g\circ f$ can~\citep{baum1988multilayer,yun2019small,bubeck2020network,madden2024memory}.

More generally, an $L$-layer FNN is a mapping $f_L\circ g_{L-1}\circ f_{L-1}\cdots g_1\circ f_1$ where $f_{\ell}:\R^{m_{\ell-1}}\to\R^{m_{\ell}}$ is linear for all $\ell\in[L]$ and $g_{\ell}:\R^{m_{\ell}}\to\R^{m_{\ell}}$ is an element-wise mapping for all $\ell\in[L]$. The element-wise mappings are called the activation functions and $\sum_{\ell=1}^{L}m_{\ell}$ is called the number of neurons. If we allow the linear mappings to be tuned, then there are $\sum_{\ell=1}^Lm_{\ell}(m_{\ell-1}+1)$ tunable parameters. If the element-wise mappings are continuously differentiable, and if the number of parameters is less than $nm_L$, then, for all $x_1,\ldots,x_n\in\R^{m_0}$, the set of $y_1,\ldots,y_n\in\R^{m_L}$ for which the data set can be interpolated is measure zero by Sard's theorem~\citep{sard1941measure}. In other words, the memory capacity is upper bounded by the number of parameters divided by $m_L$. Proportional lower bounds have been proved for FNNs with sigmoid, Heaviside, and ReLU activations~\citep{sakurai1992networks,yamasaki1993lower,huang2003learning,vershynin2020memory}, but not for general activations. We prove a proportional lower bound for three-layer FNNs only assuming the activation is real analytic at a point and not a polynomial there. This includes common activation functions such as tanh, arctan, and GELU. In fact, the only practical activation functions which are excluded are piecewise polynomials. We also extend to $L$-layer FNNs by using the first $L-3$ layers as preparation and the final three for interpolation.
But, the importance of depth is already evident for three-layer FNNs.

Let $L\ge 3$. We show, in Theorem~\ref{thm:necessary}, that $\sqrt{2nd'}+\Omega(1)$ neurons are necessary for an $L$-layer FNN to interpolate $n$ generic data points. Then, in Theorem~\ref{thm:sufficient}, we show that $2\sqrt{2nd'}+\Omega(1)$ neurons are sufficient for an $L$-layer FNN to interpolate $n$ generic data points. Thus, the necessary and sufficient conditions we show are within a factor of two of each other.

\subsection{Results}

First, in Theorem~\ref{thm:necessary}, we rigorously prove a condition on the number of neurons necessary to interpolate $n$ generic data points. While it is well known that $\Omega(\sqrt{nd'})$ neurons are necessary, we prove a more precise condition.

Then, in Theorem~\ref{thm:threelayer}, we lower bound the memory capacity of a three-layer FNN with activations which are real analytic at a point and not a polynomial there. To do so, we first, in Theorem~\ref{thm:threerank}, lower bound the generic rank of the Jacobian of a three-layer FNN with respect to its middle layer. To do that, we first lower bound the generic rank of $\phi(\psi(uv^\top)wz^\top)\sly\psi(uv^\top)$ where $\sly$ is the face-splitting product. We do this for polynomial $\phi$ and $\psi$ in Theorem~\ref{thm:threepoly}, then extend to real analytic $\phi$ and $\psi$ in Theorem~\ref{thm:threereal}.

Next, let $L\ge 4$. In Theorem~\ref{thm:deep}, we lower bound the memory capacity of an $L$-layer FNN with activations which are real analytic at a point and, for the first $L-2$ activations, nontrivial there; for the last two activations, non-polynomial there. To do so, we first, in Theorem~\ref{thm:narrowrankfull}, lower bound the generic rank of the Jacobian of a four-layer FNN with one neuron in its first layer, with respect to its third layer. We are able to reduce the deep FNN to this specific FNN using Lemma~\ref{lma:reduce}.

Finally, with the memory capacity lower bounds of Theorems~\ref{thm:threelayer} and~\ref{thm:deep} in hand, we are able to prove, in Theorem~\ref{thm:sufficient}, a condition on the number of neurons sufficient to interpolate $n$ generic data points. The necessary and sufficient conditions are asymptotically, with respect to $n$, equal up to a factor of two.

\subsection{Related work}

The memory capacity---also known as finite sample expressivity, memorization capacity, storage capacity, or, simply, capacity---of a machine learning model with $k$ parameters is the largest $n$ such that it can interpolate $n$ generic input-output pairs, where by generic we mean that the set of exceptions lies on the zero set of a nontrivial real analytic function and therefore is measure zero and closed~\citep[Corollary 10]{gunning1965analytic}. The idea of memory capacity goes back to \cite{cover1965geometrical} who considered the separating capacities of families of surfaces. Later, \cite{baum1988multilayer} proved that a two-layer FNN with Heaviside activation has memory capacity at least $\approx k$ where outputs are in $\{\pm 1\}$.
\cite{sakurai1992networks} extended this to three-layer FNNs. \cite{huang1991bounds} proved it for two-layer FNNs with sigmoid activation and outputs in $\R$. \cite{yamasaki1993lower} sketched a proof for $L$-layer FNNs with sigmoid activation. \cite{huang2003learning} proved it for three-layer FNNs with sigmoid activation. \cite{yun2019small} proved it for two-layer FNNs with ReLU activation and outputs in $\{\pm 1\}$ (their Corollary 4.2), and for three-layer FNNs with ReLU activation and outputs in $\R$ (their Theorem 3.1). \cite{bubeck2020network} proved it for two-layer FNNs with ReLU activation and outputs in $\R$. \cite{madden2024memory} proved it for two-layer FNNs with general activations (excluding only low degree polynomials and low degree splines) and outputs in $\R$.

There is also a line of recent works which make assumptions on the separability of the input data in order to prove memory capacity results. Specifically, given $n,d\in\N$ and $\delta>0$, define $\D(n,d,\delta)=\{x_1,\ldots,x_n\in\R^d\mid \delta\max_{i\neq j}\|x_i-x_j\|<\min_{i\neq j}\|x_i-x_j\|\}$. \cite{vershynin2020memory}, \cite{rajput2021exponential}, and \cite{park2021provable} proved memory capacity results for generic data sets with the input set coming from $\D(n,d,\delta)$.
\cite{vershynin2020memory} proved that an $L$-layer FNN with Heaviside or ReLU activation has memory capacity at least $\approx k-\exp(\delta^{-2})$ where outputs are in $\{0, 1\}$.
\cite{rajput2021exponential} proved that an $L$-layer FNN with Heaviside activation has memory capacity at least $\approx k-d\delta^{-1}$ where outputs are in $\{0, 1\}$. \cite{park2021provable} proved that a variable-depth FNN and sigmoid or ReLU activation can approximate, up to arbitrary precision, any data set of size at most $\approx (k-\log\delta^{-1})^{3/2}$ where outputs are in $\{1,2\}$, and any data set of size of at most $\approx k-\log\delta^{-1}-\log C$ where outputs are in $\{1,\ldots,C\}$ for some $C\in\N$.
It is easy to see that the interior of the complement of $\D(n,d,\delta)$ is nonempty, so these results do not extend to generic data sets with inputs coming from $\R^d$.

There is also a similar line of research studying the minimum singular value of the Jacobian of the mapping, given input data, from parameters to output data. This is useful from an optimization perspective because gradient descent converges at a linear rate when the minimum singular value is large enough. Moreover, when the minimum singular value is positive and there are more parameters than data points, i.e. when the Jacobian has rank $n$, the mapping is surjective. In the context of $L$-layer FNNs, \cite{bombari2022memorization} showed that the minimum singular value is positive with high probability over the data set when: (1) the activation function is non-linear, Lipschitz continuous, and has Lipschitz continuous gradient; (2) the width of subsequent layers increases by no more than a constant multiplicative constant; and (3) the final hidden layer has asymptotically more than $n\log^8(n)$ parameters. Thus, to get memorization with only $O(\sqrt{n})$ neurons, it is necessary that $L=\Omega(\log(\sqrt{n}/d))$, where $d$ is the dimension of the feature vectors. In other words, their result does not given the optimal number of neurons when $L=3$. Furthermore, the number of parameters is only optimal up to log factors and the result only holds with high probability over data sets, rather than for generic data sets. \cite{bombari2022memorization} built off of the work of \cite{nguyen2021tight}, removing the requirement in \cite{nguyen2021tight} that one of the widths be on the order of $n\log^2(n)$.

\subsection{Organization}

In Section~\ref{sec:prelims}, we go through the necessary preliminaries. In Section~\ref{sec:necessary}, we present the full FNN model and prove the necessary condition on the number of neurons. In Section~\ref{sec:three}, we prove the lower bound on the memory capacity of a three-layer FNN. In Section~\ref{sec:deep}, we prove the lower bound on the memory capacity of a deep FNN. In Section~\ref{sec:sufficient} we prove that these memory capacity lower bounds lead to a sufficient condition on the number of neurons that is, asymptotically, only twice the necessary condition.

\section{Preliminaries}
\label{sec:prelims}

Throughout the paper we use the following notation: $a\lor b$ denotes $\max\{a,b\}$, $a\land b$ denotes $\min\{a,b\}$, $[n]$ denotes $\{1,\ldots,n\}$, $\binom{A}{n}$ denotes $\{B\subset A\mid |B|=n\}$, $\tvec$ denotes the column-wise vectorize operation, $e_k\in\R^n$ denotes the $k$th coordinate vector, $\one_n$ denotes the vector of ones in $\R^n$, $a^{(k)}$ indicates that the exponent $k$ is applied to the vector $a$ element-wise, $\sym_n$ denotes the symmetric group of degree $n$, $\rao$ denotes the Khatri-Rao product (the column-wise Kronecker product), and $\sly$ denotes the face-splitting product (the row-wise Kronecker product). Given two sequences $(a_n)$ and $(b_n)$, we write $a_n=o(b_n)$ if $\lim_{n\to\infty} |a_n/b_n|=0$, $a_n=\Omega(b_n)$ if
$\limsup |b_n/a_n|<\infty$, and $a_n=\Theta(b_n)$ if $a_n=\Omega(b_n)$ and $b_n=\Omega(a_n)$. Given a matrix $A$ we use $a_k$ to denote its $k$th column. Given vectors $(a_k)_{k=1}^n$ we use $[a_k]_{k=1}^n$ to denote the matrix $[a_1|\cdots|a_n]$. Let $\ell\in\N$ and $K\subset \N\cup\{0\}$. Let $r\in \ell K$. Then $\wc(r,\ell,K)$ denotes the set of compositions of $r$ into $\ell$ parts in $K$ \citep{heubach2004compositions}. If $\indA\subset \R^d$, then we order it lexicographically. Moreover, if $\{a_1,\ldots,a_n\}\in\binom{\indA}{n}$ (where $a_1<\cdots <a_n$), then we identify it with the matrix $[a_1|\cdots|a_n]^T$, and so write $[a_1|\cdots|a_n]^T\in\binom{\indA}{n}$.

Let $M$ be a manifold and let $f:M\to\R^n$. We use $\var(f)$ to denote $\{x\in M\mid f(x)=0\}$.
If $f$ is nontrivial and real analytic, then, by Corollary 10 of \cite{gunning1965analytic}, $\var(f)^\compl$ is measure zero and closed. Generally, it is quite easy to see that a particular $f$ is real analytic, the harder part is showing that it is nontrivial. But notice how useful it is to characterize a set in this way: if there is a single point $x\in M$ such that $f(x)\neq 0$, then $\var(f)^\compl$ is measure zero and closed. This leads us to define generic, similarly to \cite{allman2009identifiability}, to mean that the set of exceptions lies on the zero set of a nontrivial, real analytic function. Note that, in addition to being measure zero and closed, the zero set of a nontrivial, real analytic function is locally a finite union of lower-dimensional manifolds~\citep{guaraldo1986topics}.

We use Sard's theorem~\citep[Thm. 6.10]{lee2013smooth} and the Constant Rank Theorem~\citep[Thm.~4.12]{lee2013smooth} from differential topology. The latter underlies Lemma~\ref{lma:everything}, which we have borrowed from \cite{madden2024memory}. We also use
the Cauchy-Binet formula \citep[Sec. I.2.4]{gantmacher1960matrices} and the Leibniz determinant formula \citep[Def. 10.33]{axler2015linear} from linear algebra.

\section{The FNN model}
\label{sec:necessary}

Let $d,d'\in\N$ and suppose data comes from $\R^d\times\R^{d'}$. Then a FNN parameterized by $\theta$ is a mapping $h_\theta:\R^d\to\R^{d'}$ defined in the following way. Let $L\in\N$. This is the number of hidden layers. The general case for $L=1$ was already dealt with in \cite{madden2024memory}, so we will assume $L\ge 2$. Let $\psi_\ell:\R\to\R$ for all $\ell\in[L]$. These are the activation functions. Let $m_1,\ldots,m_L\in\N$. These are the widths of each layer respectively.
Let $W_\ell\in\R^{m_{\ell-1}\times m_\ell}~\forall\ell\in [L]$. These are the hidden layer weight matrices.
Let $b_\ell\in\R^{m_\ell}~\forall \ell\in[L]$. These are the bias vectors.
Let $V\in\R^{m_L\times d'}$. This is the output layer weight matrix.
Then the FNN with parameters $ (W_1,b_1,\ldots,W_L,b_L,V)$ is the following composition of mappings:
\begin{align}
\label{eq:model}
    \underset{\R^d}{\rule[-.3cm]{.1pt}{1cm}}\overset{\psi_1\left(W_1^\top \cdot~+b_1\right)}{\loooongrightarrow}\underset{\R^{m_1}}{\rule[-1.3cm]{.1pt}{3cm}}\overset{\psi_2\left(W_2^\top \cdot~+b_2\right)}{\loooongrightarrow}\underset{\R^{m_2}}{\rule[-1.3cm]{.1pt}{3cm}}\cdots \overset{\psi_L\left(W_L^\top \cdot~+b_L\right)}{\loooongrightarrow}\underset{\R^{m_L}}{\rule[-1.3cm]{.1pt}{3cm}}\overset{V^\top\cdot}{\loongrightarrow}\underset{\R^{d'}}{\rule[-.3cm]{.1pt}{1cm}}.
\end{align}
We will denote it by $h_\theta$, where $\theta\coloneqq (W_1,b_1,\ldots,W_L,b_L,V)$, and call it an $(L+1)$-layer FNN with activations $(\psi_\ell)$, widths $(m_\ell)$, and parameters $\theta$.
Note that it has $\sum_{\ell=1}^{L-1} m_\ell m_{\ell+1}+dm_1+\one_L^\top m+d'm_L$ parameters total and $\one_L^\top m+d'$ neurons total. We have the following condition on the number of neurons necessary to interpolate $n$ generic points in $\R^d\times\R^{d'}$.

\begin{theorem}
\label{thm:necessary}
Let $n,d,d',L\in\N$ with $L\ge 2$. Then an $(L+1)$-layer FNN with continuously differentiable activations and less than
\begin{align*}
    \sqrt{2nd'+(d\lor d'+1)^2-2d\land d'-4L+5}-d\lor d'+d'+L-2
\end{align*}
neurons cannot interpolate $n$ generic points in $\R^d\times\R^{d'}$.
\end{theorem}
\begin{proof}
Let $\{h_\theta\mid\theta\}$ be an $(L+1)$-layer FNN with undetermined parameters and continuously differentiable activations as defined in Eq.~\eqref{eq:model}. Let $x_1,\ldots,x_d\in\R^d$ and define $F:\theta\mapsto [h_\theta(x_i)]_{i=1}^n$. If the total number of parameters is less than $nd'$, then, by Sard's theorem, the image of $F$ has measure zero. Thus, if the total number of parameters is less than $nd'$, then $\{h_\theta\mid\theta\}$ cannot interpolate $n$ generic points in $\R^d\times\R^{d'}$. The total number of parameters in $\{h_\theta\mid\theta\}$ is $\sum_{\ell=1}^{L-1} m_\ell m_{\ell+1}+dm_1+\one_L^\top m+d'm_L$ and the total number of neurons is $\one_L^\top m+d'$. To turn the necessary condition on the number of parameters into a necessary condition on the number of neurons, we will lower bound the optimization problem
\begin{align*}
    q_\N\coloneqq &\min_{m\in\N^L}\quad \one_L^\top m+d'\\
    &~~\text{s.t.}~~\quad \sum_{\ell=1}^{L-1} m_\ell m_{\ell+1}+dm_1+\one_L^\top m+d'm_L\ge nd'.
\end{align*}
Define
\begin{align*}
    q_\R=&\min_{m\in\R^L}\quad \one_L^\top m+d'\\
    &~~\text{s.t.}~~\quad \sum_{\ell=1}^{L-1} m_\ell m_{\ell+1}+dm_1+\one_L^\top m+d'm_L\ge nd'.
\end{align*}
For each $L,b\in\N$ such that $2\le L\le b$, define
\begin{align*}
    p(b)=&\max_{m\in\R^L}\quad \sum_{\ell=1}^{L-1} m_\ell m_{\ell+1}+dm_1+\one_L^\top m+d'm_L\\
    &~~\text{s.t.}~~\quad \one_L\preceq m,\quad\one_L^\top m\le b.
\end{align*}
Then we have
\begin{align*}
    q_\N \ge  q_\R = \min\{b\ge L\mid p(b)\ge nd'\}+d'.
\end{align*}
By Young's inequality,
\begin{align*}
    p(b)\le&\max_{m\in \R^L}\quad \frac{m_1^2}{2}+\sum_{\ell=2}^{L-1} m_\ell^2+\frac{m_L^2}{2}+(d+1)m_1+\sum_{\ell=2}^{L-1}m_\ell+(d'+1)m_L\\
    &~~\text{s.t.}~~\quad \one_L\preceq m,\quad\one_L^\top m\le b.
\end{align*}
The right-hand side is a maximization problem of a convex function over a nonempty, compact, convex set, so, by Corollary 32.3.1 of \cite{Rockafellar1970}, the maximum is attained at an extreme point of the set. The extreme points of the set are $\one_L$ and $\one_L+(b-L)e_\ell$ for each $\ell\in[L]$. If $d\ge d'$, then $\one_L+(b-L)e_1$ is a maximizer. If $d\le d'$, then $\one_L+(b-L)e_L$ is a maximizer. So,
\begin{align*}
    p(b)\le \frac{1}{2}(b-L+1)^2+(d\lor d'+1)(b-L+1)+d\land d'+2L-\frac{5}{2}.
\end{align*}
Thus,
\begin{align*}
    q_\N\ge \sqrt{2nd'+(d\lor d'+1)^2-2d\land d'-4L+5}-d\lor d'+d'+L-2,
\end{align*}
proving the theorem.
\end{proof}

To get a sufficient condition on the number of neurons, we will restrict to the case $d'=1$ and extend to more general $d'$ afterwards. For all $X\in\R^{d\times n}$, define
\begin{align*}
    F_{(m_\ell),n}(X,W_1,\ldots,W_L,v)=v^\top\psi_L\left(W_L^\top\cdots \psi_1\left(W_1^\top X\right)\cdots\right)\in\R^n.
\end{align*}
We will often denote $F_{(m_\ell),n}$ by $F$ with $(m_\ell)$ and $n$ clear from the dimensions of the inputs. We include bias vectors in the full model but only need $F$ in the proofs.

Given $X\in\R^{d\times n}$ and $y\in\R^n$, the following lemma gives a sufficient condition for the equation $y^\top=F(X,W_1,\ldots,W_L,v)$ to have a solution.

\begin{lemma}[Thm. 5.2 of \cite{madden2024memory}]
\label{lma:everything}
Let $n,d,m\in\N$.
Let $M\subset \R^d$ be open. Let $f:M\to\R^{n\times m}$ be $C^1$. Define $\Tilde{f}:M\times M\to\R^{n\times 2m}:(w,u)\mapsto [f(w)~f(u)]$. For all $v,z\in\R^m$, define $F_v:M\to\R^n:w\mapsto f(w)v$ and $\Tilde{F}_{v,z}:M\times M\to\R^n:(w,u)\mapsto \Tilde{f}(w,u)[v;z]$. If there exists $v_0\in\R^m$ and $w_0\in M$ such that $\rank(DF_{v_0}(w_0))=n$, then $\Tilde{F}$ is surjective as a function of $(v,z)\in\R^m\times\R^m$ and $(w,u)\in M\times M$.
\end{lemma}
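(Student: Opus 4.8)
The plan is to deduce the surjectivity of $G$ from two elementary features of its image: that it is a cone (closed under multiplication by every real scalar) and that it contains a neighborhood of the origin. A cone containing an open ball around $0$ must equal $\R^n$, since any $y\in\R^n$ can be scaled into that ball and then scaled back out. So everything reduces to exhibiting an open neighborhood of $0$ inside the image of $G$, and the rank hypothesis will be used only for that.

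First I would record the structural identities. Write $\tilde F(v,w)\coloneqq f(w)v$, so that $G_{v,z}(w,u)=f(w)v+f(u)z=\tilde F(v,w)+\tilde F(z,u)$; hence the image of $G$ is the Minkowski sum $S+S$, where $S\coloneqq\{\tilde F(v,w):v\in\R^{\omega},\,w\in M\}$. Because $\tilde F(tv,w)=t\,\tilde F(v,w)$ for every $t\in\R$, the set $S$ is closed under multiplication by every real scalar; in particular $0\in S$ and $S=-S$, and consequently $S+S$ is again a cone. This already gives the cone property of the image of $G$.

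Next I would use the rank hypothesis to show $S$ contains a nonempty open set. The map $\tilde F\colon\R^{\omega}\times M\to\R^n$ is $C^1$, and its differential at $(v_0,w_0)$ is the block matrix $\bigl[\,f(w_0)\ \big|\ DF_{v_0}(w_0)\,\bigr]\in\R^{n\times(\omega+k)}$, since $v\mapsto f(w_0)v$ is linear with derivative $f(w_0)$ while $w\mapsto f(w)v_0=F_{v_0}(w)$ has derivative $DF_{v_0}(w_0)$. This matrix has $n$ rows and contains the rank-$n$ submatrix $DF_{v_0}(w_0)$, so it has rank $n$; thus $\tilde F$ is a submersion at $(v_0,w_0)$ (and, by lower semicontinuity of rank, on a neighborhood). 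By the Constant Rank Theorem, $\tilde F$ then carries a neighborhood of $(v_0,w_0)$ onto an open set $V$ containing $p\coloneqq f(w_0)v_0$, so $V\subseteq S$.

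Finally I would assemble the pieces. Since $S=-S$ we also have $-V\subseteq S$, so the image of $G$ equals $S+S\supseteq V+(-V)=V-V$. The set $V-V$ is open (a union of translates of $V$) and contains $p-p=0$, so the image of $G$ contains an open neighborhood of the origin; being also a cone, it equals $\R^n$, i.e.\ $G$ is surjective. The one real subtlety, and the reason the statement is phrased with the doubled map $g$ rather than $f$ itself, is that the submersion argument yields only local surjectivity near the point $p$, which need not be $0$; doubling, together with the symmetry $S=-S$, is exactly what relocates a neighborhood of $p$ to a neighborhood of $0$, after which scale invariance finishes the proof.
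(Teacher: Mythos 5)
Your argument is correct, and it is essentially the route the paper takes: the paper offers no proof of this lemma beyond citing the Constant Rank Theorem and Theorem 5.2 of \cite{madden2024memory}, whose argument is precisely your three ingredients --- local openness of the image near $p=f(w_0)v_0$ from the rank-$n$ (submersion) hypothesis, the doubled copy of $f$ to translate that neighborhood to the origin, and homogeneity in $(v,z)$ to scale out to all of $\R^n$. Your cone/Minkowski-sum packaging is just a tidy way of organizing the same steps.
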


One consequence of Lemma~\ref{lma:everything} is that, for every $X\in\R^{d\times n}$, if there is a single $(W_1,\ldots,W_L,v)$ such that the Jacobian of $F_{(m_\ell),n}$ with respect to the final hidden layer has rank $n$, then it follows that $y^\top =F_{(m_1,\ldots,m_{L-1},2m_L),n}(X,W_1,\ldots,W_L,v)$ has a solution for all $y\in\R^n$. In fact, we will show that the Jacobian with respect to the final hidden layer has rank $n$ for \textit{generic} $(X,W_1,\ldots,W_L,\one_{m_L})$ as long as $m_L(m_{L-1}-1)\ge n$.
The Jacobian with respect to the final hidden layer is
\begin{align*}
    \partial_{\tvec(W_L)}F(X,W_1,\ldots,W_L,v) = G(X,W_1,\ldots,W_L,v)^\top&\\
    \text{where }G(X,W_1,\ldots,W_L,v)\coloneqq \diag(v)\psi_L'\left(W_L^\top \hat{X}\right)\rao \hat{X}&\\
    \text{with }\hat{X} \coloneqq \psi_{L-1}\left(W_{L-1}^\top\cdots\psi_1\left(W_1^\top X\right)\cdots\right).&
\end{align*}

\section{Three layers}
\label{sec:three}

First, we will consider the case $L=2$. Here, the FNN with parameters $(W,b,U,c,v)$ is the following composition of mappings:
\begin{align*}
    \underset{\R^d}{\rule[-.3cm]{.1pt}{1cm}}\overset{\psi\left(W^\top \cdot~+b\right)}{\loooongrightarrow}\underset{\R^{m}}{\rule[-1.3cm]{.1pt}{3cm}}\overset{\phi\left(U^\top \cdot~+c\right)}{\loooongrightarrow}\underset{\R^{\ell}}{\rule[-1.3cm]{.1pt}{3cm}}\overset{v^\top\cdot}{\loongrightarrow}\underset{\R}{\bm{\cdot}}
\end{align*}
With biases set to zero, the Jacobian with respect to the second layer is
\begin{align*}
    \partial_{\tvec(U)}F(X,W,U,v)=\phi'\left(\psi\left(X^\top W\right)U\right)\diag(v)\sly \psi\left(X^\top W\right).
\end{align*}
So, by Lemma~\ref{lma:everything}, we can get a memory capacity result by lower bounding the generic rank of $\phi'(\psi(X^\top W)U)\sly \psi(X^\top W)$. The rank result is Theorem~\ref{thm:threerank} and the memory capacity result is Theorem~\ref{thm:threelayer}.

We prove Theorem~\ref{thm:threerank} by first lower bounding the generic rank of $\phi(\psi(uv^\top)wz^\top)\sly\psi(uv^\top)$ in Theorem~\ref{thm:threereal}. To see that this is sufficient, let $I\subset [n]$ and $J\subset [m\ell]$ such that $|I|=|J|$. Let $f(u,v,w,z)=\det_{I,J}(\phi(\psi(uv^\top)wz^\top)\sly\psi(uv^\top))$ and $g(X,W,U)=\det_{I,J}(\phi'(\psi(X^\top W)U)\sly \psi(X^\top W))$. If $f\not\equiv 0$, then there exists $(u,v,w,z)$ such that $g(\one_du^\top/\sqrt{d},\one_dv^\top/\sqrt{d},wz^\top)=f(u,v,w,z)\neq 0$, so $g\not\equiv 0$. Thus, Theorem~\ref{thm:threereal} implies Theorem~\ref{thm:threerank}.

To prove Theorem~\ref{thm:threereal}, we first prove it when $\phi$ and $\psi$ are polynomials of sufficiently high degree---Theorem~\ref{thm:threepoly}---then extend to non-polynomial real analytic functions using Taylor's theorem. The proof of Theorem~\ref{thm:threepoly} is the most difficult proof in the paper, so we will sketch it here.

First, we decompose $\phi(\psi(uv^\top)wz^\top)\sly\psi(uv^\top)$ as a linear combination of rank-one matrices. Then, we apply the Cauchy-Binet formula to get
\begin{align*}
    \det_{I,J}\left(\phi\left(\psi\left(uv^\top\right)wz^\top\right)\sly\psi\left(uv^\top\right)\right)=\sum_{k,\ell}\left(\sum_{r}\xi_{k,\ell,r}p_{k,\ell,r}(a)\right)q_{k,\ell}(b,c)=p(a,b,c)
\end{align*}
where $p$, the $p_{k,\ell,r}$, and the $q_{k,\ell}$ are polynomials. We want to show that $p\not\equiv 0$. We will do this in three steps: (1) construct $(k^*,\ell^*)$ such that $q_{k^*,\ell^*}$ is linearly independent from the other $q_{k,\ell}$, (2) construct $r^*$ such that $p_{k^*,\ell^*,r^*}$ is linearly independent from the other $p_{k^*,\ell^*,r}$, and (3) show that $\xi_{k^*,\ell^*,r^*}\neq 0$. Both of the first two steps will require induction arguments.

\begin{theorem}
\label{thm:threepoly}
Let $n,d,m\in\N$. Let $K\subset \N\cup\{0\}$ and $L\subset \N\cup\{0\}$ such that $\min\{|K|,|L|\}\ge \lfloor n/(d-1)\rfloor (d-1)$. Let $\alpha_k\in\R\backslash\{0\}~\forall k\in K$ and $\beta_{\ell}\in\R\backslash\{0\}~\forall \ell\in L$. Define $\psi(x)=\sum_{k\in K}\alpha_k x^k$ and $\phi(x)=\sum_{\ell\in L}\beta_{\ell}x^{\ell}$. Then there exists a nontrivial polynomial function $f:\R^n\times \R^d\times\R^d\times\R^m\to\R$ such that, for all $(u,v,w,z)\in\var(f)^\compl$,
\begin{align*}
    \rank\left(\psi\left(uv^\top \right)\sly \phi\left(\psi\left(uv^\top \right)wz^\top \right)\right)\ge \min\{m,\lfloor n/(d-1)\rfloor\}(d-1).
\end{align*}
\end{theorem}
\begin{proof}
Let $\Tilde{n},\Tilde{d},\Tilde{m}\in\N$. Define $d=\widetilde{d}-1$, $n=d\lfloor \widetilde{n}/d\rfloor$, and $m=n/d$. Define $I=[n]$ and $J=[d]\times [m]$. Let $u\in\R^{\Tilde{n}}$, $v,w\in\R^{\Tilde{d}}$, and $z\in\R^{\Tilde{m}}$. We want to show that $\det_{I,J}(\psi(uv^\top )\sly \phi(\psi(uv^\top )wz^\top ))$ is nonzero for generic $(u,v,w,z)$. To do so, we just need to construct a single example such that this is the case. Towards this end, set $v_{\Tilde{d}}=1$ and $w=e_{\widetilde{d}}$. Then, observe,
\begin{align*}
    \psi\left(uv^\top \right)=\sum_{k\in K}\alpha_k\left(uv^\top \right)^{(k)}=\sum_{k\in K}\alpha_k u^{(k)}v^{(k)T}
\end{align*}
and
\begin{align*}
    \phi\left(\psi\left(uv^\top \right)wz^\top \right)&=\sum_{\ell\in L}\beta_{\ell}\left(\psi\left(uv^\top \right)w\right)^{(\ell)}z^{(\ell)T}\\
    &=\sum_{\ell\in L}\beta_{\ell}\left(\sum_{k\in K}\alpha_k u^{(k)}\right)^{(\ell)}z^{(\ell)T}\\
    &=\sum_{\ell\in L,k\in K^{\ell}}\beta_{\ell}\alpha_{k_1}\cdots\alpha_{k_{\ell}} u^{(k_1+\cdots+k_{\ell})}z^{(\ell)T}\\
    &=\sum_{\ell\in L,r\in \ell K}\beta_{\ell}\underset{\coloneqq \gamma_{\ell,r}}{\underbrace{\sum_{k\in\wc(r,\ell,K)}\alpha_{k_1}\cdots\alpha_{k_{\ell}}}} u^{(r)}z^{(\ell)T}
\end{align*}
where we use the convention that $\alpha_{k_1}\cdots\alpha_{k_{\ell}}=1$, $k_1+\cdots+k_{\ell}=0$, and $\gamma_{\ell,r}=1$ if $\ell=0$. Next, using that $ab^\top \sly cy^\top =(a\circ c)(b\otimes y)^\top $,
\begin{align*}
    \psi\left(uv^\top \right)\sly \phi\left(\psi\left(uv^\top \right)wz^\top \right)=\sum_{k\in K,\ell \in L,r\in \ell K}\alpha_{k}\beta_{\ell}\gamma_{\ell,r} u^{(k+r)}\left(v^{(k)}\otimes z^{(\ell)}\right)^\top .
\end{align*}
Let $\indA$ denote the set of indices. Let $a$ denote the vector of the first $n$ entries of $u$, $b$ the first $d$ entries of $v$, and $c$ the first $m$ entries of $z$. Then, applying the Cauchy-Binet formula,
\begin{align*}
    p(a,b,c)&\coloneqq \det_{I,J}\left(\psi\left(uv^\top \right)\sly \phi\left(\psi\left(uv^\top \right)wz^\top \right)\right)\\
    &= \sum_{[k|\ell|r]\in\binom{\indA}{n}}\left(\prod_{i=1}^n\alpha_{k_i}\beta_{\ell_i}\gamma_{\ell_i,r_i}\right)\det\left(\left[a^{(k_i+r_i)}\right]_{i=1}^n\right)\det\left(\left[b^{(k_i)}\otimes c^{(\ell_i)}\right]_{i=1}^n\right)\\
    &\coloneqq\sum_{[k|\ell|r]\in\binom{\indA}{n}}\xi_{k,\ell,r}p_{k,\ell,r}(a)q_{k,\ell}(b,c).
\end{align*}

We want to show that $p$ is not identically zero. To start, if $[k|\ell]\in\binom{K\times L}{s}$ for some $s<n$, then $q_{k,\ell}\equiv 0$ since there will be repeat columns. Thus, we can restrict to $[k|\ell]\in \binom{K\times L}{n}$ and $r_i\in \ell_i K~\forall i\in [n]$. From this, we get
\begin{align*}
    p(a,b,c) = \sum_{[k|\ell]\in\binom{K\times L}{n}}\underset{\coloneqq p_{k,\ell}(a)}{\underbrace{\left(\sum_{r_i\in \ell_i K~\forall i\in [n]}\xi_{k,\ell,r}p_{k,\ell,r}(a)\right)}}q_{k,\ell}(b,c).
\end{align*}

Note that, since $\min\{|K|,|L|\}\ge n$, there exists $k\in\binom{K}{n}$ and $\ell\in\binom{L}{n}$. We will complete the proof of the theorem with the following three steps. First, we will construct $k^*\in\binom{K}{n}$ and $\ell^*\in\binom{L}{n}$ such that $q_{k^*,\ell^*}$ is linearly independent from $q_{k,\ell}$ for all other $[k|\ell]\in\binom{K\times L}{n}$. Second, we will construct $r_i^*\in \ell_i^* K~\forall i\in[n]$ such that $p_{k^*,\ell^*,r^*}$ is linearly independent from $p_{k^*,\ell^*,r}$ for all other $r_i\in\ell_i^* K~\forall i\in[n]$. Third, we will show that $\xi_{k^*,\ell^*,r^*}\neq 0$. Then it follows that $p$ is not identically zero.

To begin step one, let $[k|\ell]\in\binom{K\times L}{n}$. Then, applying the Leibniz determinant formula, we get
\begin{align*}
    q_{k,\ell}&=\sum_{\sigma\in\sym(n)}\sgn(\sigma)\prod_{i=1}^d\prod_{j=1}^mb_i^{k_{\sigma(m(i-1)+j)}}c_{j}^{\ell_{\sigma(m(i-1)+j)}}\\
    &=\sum_{\sigma\in\sym(n)}\sgn(\sigma)\underset{\coloneqq q_{k,\ell,\sigma}}{\underbrace{\left(\prod_{i=1}^db_i^{\sum_{j=1}^m k_{\sigma(m(i-1)+j)}}\right)\left(\prod_{j=1}^mc_{j}^{\sum_{i=1}^d\ell_{\sigma(m(i-1)+j)}}\right)}}.
\end{align*}

Let $k^*$ be the smallest $n$ integers in $K$ and let $\ell^*$ be the smallest $n$ integers in $L$. Let $\sigma\in\sym(n)$. Let $\tau\in\sym(n)$ be the identity permutation. Suppose $q_{k^*,\ell^*,\sigma}=q_{k^*,\ell^*,\tau}$. Then $\sum_{j=1}^m k_{\sigma(m(i-1)+j)}^*=\sum_{j=1}^m k_{m(i-1)+j}^*~\forall i\in[d]$ and $\sum_{i=1}^d\ell_{\sigma(m(i-1)+j)}^*=\sum_{i=1}^d\ell_{m(i-1)+j}^*~\forall j\in[m]$. Thus, $\sigma=\tau$ since both $k_i^*$ and $\ell_{j}^*$ are increasing. Thus, the monomial $q_{k^*,\ell^*,\tau}$ has coefficient 1 in $q_{k^*,\ell^*}$.

Now, suppose $q_{k,\ell,\sigma}=q_{k^*,\ell^*,\tau}$. Then
$\sum_{j=1}^m k_{\sigma(m(i-1)+j)}=\sum_{j=1}^m k_{m(i-1)+j}^*~\forall i\in[d]$ and $\sum_{i=1}^d\ell_{\sigma(m(i-1)+j)}=\sum_{i=1}^d\ell_{m(i-1)+j}^*~\forall j\in[m]$. We will prove that $\sigma=\tau$, $k=k^*$, and $\ell=\ell^*$ with two induction steps.

First, $\sum_{j=1}^mk_{j}^*$ is the sum of the smallest $m$ integers in $K$. Thus, $\sigma([m])=[m]$ and $k_j=k_j^*~\forall j\in[m]$. Let $i\in[d-1]$. Suppose $\sigma([ms]\backslash[m(s-1)])=[ms]\backslash[m(s-1)]~\forall s\in[i]$ and $k_j=k_j^*~\forall j\in[mi]$. Then $\sum_{j=1}^mk_{mi+j}^*$ is the sum of the next smallest $m$ integers in $K$. Thus, $\sigma([m(i+1)]\backslash[mi])=[m(i+1)]\backslash[mi]$ and $k_j=k_j^*~\forall j\in[m(i+1)]$. So, by induction, $k=k^*$ and $\sigma([mi]\backslash[m(i-1)])=[mi]\backslash[m(i-1)]~\forall i\in[d]$.

We can prove with a similar induction step that $\ell=\ell^*$ and $\sigma(m[d]-m+j)=m[d]-m+j~\forall j\in[m]$. Putting the two properties of $\sigma$ together, we get that $\sigma=\tau$. Thus, the monomial $q_{k^*,\ell^*,\tau}$, which has coefficient 1 in $q_{k^*,\ell^*}$, has coefficient 0 in all other $q_{k,\ell}$. In other words, $q_{k^*,\ell^*}$ is linearly independent from $q_{k,\ell}$ for all other $[k|\ell]\in\binom{K\times L}{n}$, completing step one.

Moving on to step two, let $r_i\in \ell_i^* K~\forall i\in [n]$. Note that $p_{k^*,\ell^*,r}\equiv 0$ unless the $k_i^*+r_i$ are distinct, so suppose that this is the case. Then, applying the Leibniz determinant formula, we get
\begin{align*}
    p_{k^*,\ell^*,r} = \sum_{\sigma\in\sym(n)}\sgn(\sigma)\underset{\coloneqq p_{k^*,\ell^*,r,\sigma}}{\underbrace{a_1^{k_{\sigma(1)}^*+r_{\sigma(1)}}\cdots a_n^{k_{\sigma(n)}^*+r_{\sigma(n)}}}}.
\end{align*}

For each $i\in[n]$, let $r_i^*$ be the smallest integer in $\ell_i^* K$. Note that the $k_i^*$ are increasing and the $r_i^*$ are nondecreasing so the $k_i^*+r_i^*$ are increasing and therefore distinct. Let $\sigma\in\sym(n)$. Let $\tau\in\sym(n)$ be the identity permutation. Suppose $p_{k^*,\ell^*,r^*,\sigma}=p_{k^*,\ell^*,r^*,\tau}$. Then $k_{\sigma(i)}^*+r_{\sigma(i)}^*=k_i^*+r_i^*~\forall i\in[n]$. Thus, $\sigma=\tau$ since the $k_i^*+r_i^*$ are distinct. So, the monomial $p_{k^*,\ell^*,r^*,\tau}$ has coefficient 1 in $p_{k^*,\ell^*,r^*}$.

Now, suppose $p_{k^*,\ell^*,r,\sigma}=p_{k^*,\ell^*,r^*,\tau}$. Then $k_{\sigma(i)}^*+r_{\sigma(i)}=k_{i}^*+r_{i}^*~\forall i\in[n]$. We will prove that $\sigma=\tau$ and $r=r^*$ by induction on $i$.

First, $k_1^*+r_1^*$ is the sum of the smallest integer in $\{k_1^*,\ldots,k_n^*\}$ and the smallest integer in $\ell_1^* K\cup\cdots\cup\ell_n^* K$. Thus, $k_{\sigma(1)}^*=k_1^*$ and $r_{\sigma(1)}=r_1^*$; in other words, $\sigma(1)=1$ and $r_1=r_1^*$. Now, suppose $\sigma(i)=i$ and $r_i=r_i^*$ for all $i< s\le n$. Then, $k_s^*+r_s^*$ is the sum of the smallest integer in $\{k_s^*,\ldots,k_n^*\}$ and the smallest integer in $\ell_s^* K\cup\cdots\cup\ell_n^* K$. Thus, $k_{\sigma(s)}^*=k_s^*$ and $r_{\sigma(s)}=r_s^*$; in other words, $\sigma(s)=s$ and $r_s=r_s^*$. So, by induction, $\sigma=\tau$ and $r=r^*$.

So, the monomial $p_{k^*,\ell^*,r^*,\tau}$, which has coefficient 1 in $p_{k^*,\ell^*,r^*}$, has coefficient 0 in all other $p_{k^*,\ell^*,r}$. In other words, $p_{k^*,\ell^*,r^*}$ is linearly independent from $p_{k^*,\ell^*,r}$ for all other $r_i\in \ell_i^* K~\forall i\in[n]$, completing step two.

Moving on to step three, $\xi_{k^*,\ell^*,r^*}\neq 0$ if and only if $\gamma_{\ell_i^*,r_i^*}\neq 0~\forall i\in[n]$. Let $i\in[n]$. If $\ell_i^*=0$, then $\gamma_{\ell_i^*,r_i^*}=1\neq 0$. Suppose $\ell_i^*\neq 0$. Then, since $r_i^*$ is the smallest integer in $\ell_i^* K$, $\wc(r_i^*,\ell_i^*,K)$ has only one element, namely $(k_1^*,\ldots,k_1^*)$. Thus,
\begin{align*}
    \gamma_{\ell_i^*,r_i^*} = \alpha_{k_1^*}^{\ell_i^*}\neq 0,
\end{align*}
completing step three, and so completing the proof.
\end{proof}

\begin{theorem}
\label{thm:threereal}
Let $n,d,m\in\N$. Let $\psi:\R\to\R$ and $\phi:\R\to\R$ both be real analytic at zero and not a polynomial there. Let their radii of convergence at zero be $\rho$ and $\rho'$ respectively and define
\begin{align*}
    M=\{(u,v,w,z)\in\R^n\times\R^d\times\R^d\times\R^m\mid |u_iv_j|<\rho,|\psi(u_iv^\top )wz_k|<\rho'~\forall (i,j,k)\}.
\end{align*}
Then $0\in M$, $M$ is open, and there exists a nontrivial real analytic function $f:M\to\R$ such that, for all $(u,v,w,z)\in \var(f)^\compl$,
\begin{align*}
    \rank\left(\psi\left(uv^\top \right)\sly \phi\left(\psi\left(uv^\top \right)wz^\top \right)\right)\ge \min\{m,\lfloor n/(d-1)\rfloor\}(d-1).
\end{align*}
\end{theorem}
\begin{proof}
First, to show that $M$ is open, let $M'$ be the preimage of $(-\rho,\rho)^{n\times d}$ under $(u,v)\mapsto uv^\top $. Then $M$ can be seen as the preimage of $(-\rho,\rho)^{n\times d}\times(-\rho',\rho')^{n\times m}$ under $M'\times\R^d\times\R^m\to\R^{n\times d}\times\R^{n\times m}:(u,v,w,z)\mapsto (uv^\top ,\psi(uv^\top )wz^\top )$. The mapping is continuous, therefore $M$ is open.

Next, let $(\alpha_k)$ and $(\beta_k)$ be the coefficients of the Taylor expansions at zero of $\psi$ and $\phi$ respectively. Given $K\in\N$, define $\psi_K=\sum_{k=0}^K\alpha_kx^k$ and $\phi_K=\sum_{k=0}^K\beta_kx^k$. Let $I=[\lfloor n/(d-1)\rfloor(d-1)]$ and $J=[d-1]\times [\lfloor n/(d-1)\rfloor]$. Define
\begin{align*}
    f:M\to\R:(u,v,w,z)\mapsto \det_{I,J}\left(\psi\left(uv^\top \right)\sly \phi\left(\psi\left(uv^\top \right)wz^\top \right)\right).
\end{align*}
Let $K,L\in\N$ and define
\begin{align*}
    g_{K,L}:M\to\R:(u,v,w,z)\mapsto \det_{I,J}\left(\psi_K\left(uv^\top \right)\sly \phi_L\left(\psi_K\left(uv^\top \right)wz^\top \right)\right).
\end{align*}
If $K$ and $L$ are sufficiently large for $\psi_K$ and $\phi_L$ to both have at least $\lfloor n/(d-1)\rfloor(d-1)$ monomials, then the monomial $p_{k^*,\ell^*,r^*,\tau}q_{k^*,\ell^*,\tau}$ from the proof of Theorem~\ref{thm:threepoly} has coefficient $\gamma_{k^*,\ell^*,r^*}\neq 0$ in $g_{K,L}$. Moreover, $k^*$, $\ell^*$, and $r^*$ do not change as $K$ and $L$ increase further. Thus, the monomial $p_{k^*,\ell^*,r^*,\tau}q_{k^*,\ell^*,\tau}$ has coefficient $\gamma_{k^*,\ell^*,r^*}\neq 0$ in the Taylor expansion of $f$ at zero as well. In other words, the Taylor series of $f$ at zero has at least one nonzero coefficient, and so $f$ is not identically zero, proving the theorem.
\end{proof}

Theorem~\ref{thm:threereal} easily extends to general matrices which are not necessarily rank-one.

\begin{theorem}
\label{thm:threerank}
Let $n,d,m,\ell\in\N$. Let $\psi:\R\to\R$ and $\phi:\R\to\R$ both be real analytic at zero and not a polynomial there. Let their radii of convergence at zero be $\rho$ and $\rho'$ respectively and define
\begin{align*}
    M=\{(X,W,U)\in\R^{d\times n}\times\R^{d\times m}\times\R^{m\times \ell}\mid |x_i^\top w_j|<\rho,|\psi(x_i^\top W)u_k|<\rho'~\forall (i,j,k)\}.
\end{align*}
Then $0\in M$, $M$ is open, and there exists a nontrivial real analytic function $f:M\to\R$ such that, for all $(X,W,U)\in\var(f)^\compl$,
\begin{align*}
    \rank\left(\psi\left(X^\top W\right)\sly \phi\left(\psi\left(X^\top W\right)U\right)\right)\ge \min\{\ell,\lfloor n/(m-1)\rfloor\}(m-1)
\end{align*}
\end{theorem}
\begin{proof}
Let $f:M\to \R$ be the sum of squares of minors of order $\min\{\ell,\lfloor n/(m-1)\rfloor\}(m-1)$. To see that $f$ is nontrivial, let $(u,v,w,z)\in \var(g)^\compl$, where $g$ is the nontrivial real analytic function from Theorem~\ref{thm:threereal}, and set $X=\one_du^\top/\sqrt{d}$, $W=\one_dv^\top/\sqrt{d}$, and $U=wz^\top$.
\end{proof}

Now, we will apply Lemma~\ref{lma:everything} and Theorem~\ref{thm:threerank} to prove the following result, which includes bias vectors.

\begin{theorem}
\label{thm:threelayer}
Let $n,d,m,\ell\in\N$ such that $\ell\ge 2\lceil n/(m-1)\rceil$. Let $\psi:\R\to\R$ and $\phi:\R\to\R$ each be real analytic at a point and not a polynomial there. Then there exists a nontrivial real analytic function $f:\R^{d\times n}\backslash\{0\}\to\R$ such that, for all $X\in\var(f)^\compl$ and $y\in\R^n$, there exists $W\in\R^{d\times m}$, $b\in\R^m$, $U\in\R^{m\times \ell}$, $c\in\R^\ell$, and $v\in\R^\ell$ such that
\begin{align*}
    y = v^\top\phi\left(U^\top\psi\left(W^\top X+b\one_n^\top\right)+c\one_n^\top\right).
\end{align*}
\end{theorem}
\begin{proof}
Since the only requirement on $n,d,m,\ell$ is that they satisfy $\ell\ge2\lceil n/(m-1)\rceil$, we can assume, without loss of generality, that $(m-1)|n$.
Set $\ell'=\lfloor \ell/2\rfloor$. Let $\eta\in\R$ be a point where $\psi$ is real analytic and not a polynomial. Let $\zeta$ be such a point for $\phi$. By setting $b=\eta\one_m$ and $c=\zeta\one_\ell$, we can assume, without loss of generality, that $\eta=\zeta=0$ and remove the bias vectors. Set $v'=\one_{\ell'}$. Applying Theorem~\ref{thm:threerank}, there exists a nontrivial real analytic function $g:M\to\R$ such that, for all $(X',W',U')\in\var(g)^\compl$, $\rank(G_{(m,\ell'),n}(X',W',U',v'))=n$. Let $(X',W',U')\in\var(g)^\compl$. Using $\rho$ and $\rho'$ from the definition of $M$ in Theorem~\ref{thm:threerank}, define $I=(-\rho,\rho)\cap(-1,1)$, $a=\sup_{x\in\bar{I}}|\psi(x)|$, and
\begin{align*}
    f:\R^{d\times n}\backslash\{0\}\to\R:X\mapsto g\left(X,\frac{\max\{1,\rho\}W'}{2\|X\|_F\|W'\|_F},\frac{\rho' U'}{2a\|U'\|_{1,\infty}}\right).
\end{align*}
Then, for all $(i,j,k)$,
\begin{align*}
    \frac{\max\{1,\rho\}|x_i^\top w_j'|}{2\|X\|_F\|W'\|_F}\le \frac{\max\{1,\rho\}\|x_i\|_2\|w_j'\|_2}{2\|X\|_F\|W'\|_F}<\max\{1,\rho\},
\end{align*}
so
\begin{align*}
    \bigg|\psi\left(\frac{\max\{1,\rho\}x_i^\top W'}{2\|X\|_F\|W'\|_F}\right)\frac{\rho' u_k'}{2a\|U'\|_{1,\infty}}\bigg|\le \frac{a\rho' \|u_k'\|_1}{2a\|U'\|_{1,\infty}}< \rho',
\end{align*}
and so $f$ is well defined. Moreover, $f$ is nontrivial and real analytic. Let $X\in\var(f)^\compl$. Then $F_{(m,\ell),n}(X,\cdot)$ is surjective by Lemma~\ref{lma:everything}, completing the proof.
\end{proof}

\section{Four or more layers}
\label{sec:deep}

First, we will consider a four layer FNN with its first layer width equal to one. Here, the FNN with parameters $(u,z,W,z)$ is the following composition of mappings:
\begin{align*}
    \underset{\R^d}{\rule[-.3cm]{.1pt}{1cm}}\overset{\varphi\left(u^\top\cdot\right)}{\looongrightarrow}\underset{\R}{\bm{\cdot}}\overset{\psi\left(z \cdot\right)}{\looongrightarrow}\underset{\R^{m}}{\rule[-1.3cm]{.1pt}{3cm}} \overset{\phi\left(W^\top \cdot\right)}{\looongrightarrow}\underset{\R^{\ell}}{\rule[-1.3cm]{.1pt}{3cm}}\overset{v^\top\cdot}{\loongrightarrow}\underset{\R}{\bm{\cdot}}
\end{align*}

Essentially, since we are only solving for the final hidden layer anyway, we compress the data in the initial layers and only use the final three. First, we lower bound the rank of the Jacobian when the final hidden layer matrix is rank-one.

\begin{theorem}
\label{thm:narrowrank}
Let $n,d,m,\ell\in\N$.
Let $\varphi:\R\to\R$, $\psi:\R\to\R$, and $\phi:\R\to\R$ be real analytic at zero with radii of convergence $\rho$, $\rho'$, and $\rho''$ respectively. Assume $\varphi$ is nontrivial at zero. Assume $\psi$ and $\phi$ are not polynomials at zero.
Define
\begin{align*}
    M=\{(X,u,v,w,z)\in\R^{d\times n}\times\R^d\times\R^m\times\R^m\times\R^\ell\mid |&x_i^\top u|<\rho,\quad |\varphi(x_i^\top u)v_j|<\rho',\\
    &|\psi(\varphi(x_i^\top u)v^\top)wz_k|<\rho''~\forall (i,j,k)\}.
\end{align*}
Then $0\in M$, $M$ is open, and there exists a nontrivial real analytic function $f:M\to\R$ such that, for all $(X,u,v,w,z)\in\var(f)^\compl$,
\begin{align*}
    \rank\left(\psi\left(\varphi\left(X^\top u\right)v^\top \right)\sly \phi\left(\psi\left(\varphi\left(X^\top u\right)v^\top \right)wz^\top \right)\right)\ge \min\{\ell,\lfloor n/(m-1)\rfloor\}(m-1).
\end{align*}
\end{theorem}
\begin{proof}
Let $f$ be the nontrivial real analytic function from Theorem~\ref{thm:threereal}. Define $g:M\to\dom(f):(X,u,v,w,z)\mapsto (\varphi(X^\top u),v,w,z)$, $I=(-\rho,\rho)\cap (-1,1)$, $I'=(-\rho',\rho')\cap(-1,1)$, $a=\sup_{x\in \bar{I}}|\varphi(x)|$, $a'=\sup_{x\in \bar{I'}}|\psi(x)|$, and
\begin{align*}
    A = \intr(\varphi(I))^n\times (I'/a)^m\times(-1/m,1/m)^m\times (-\rho''/a',\rho''/a')^\ell.
\end{align*}
Note that $A$ is nonempty because $\varphi$ is nontrivial at zero. Furthermore, $A$ has positive Lebesgue measure since it is both nonempty and open.
Let $(u'',v,w,z)\in A$. Then there exists $u'\in I^n$ such that $\varphi(u')=u''$. Set $X=[u'\quad]^\top$ and $u=e_1$. Then $g(X,u,v,w,z)=(u'',v,w,z)$. So, $A\subset \img(g)$. Thus, $\img(g)\not\subset \var(f)$ since $\var(f)$ is Lebesgue measure zero. So, the result holds with $f\circ g$.
\end{proof}

Now, we extend to when the final hidden layer matrix is not necessarily rank-one.

\begin{theorem}
\label{thm:narrowrankfull}
Let $n,d,m,\ell\in\N$.
Let $\varphi:\R\to\R$, $\psi:\R\to\R$, and $\phi:\R\to\R$ be real analytic at zero with radii of convergence $\rho$, $\rho'$, and $\rho''$ respectively. Assume $\varphi$ is nontrivial at zero. Assume $\psi$ and $\phi$ are not polynomials at zero.
Define
\begin{align*}
    M=\{(X,u,v,W)\in\R^{d\times n}\times\R^d\times\R^m\times\R^{m\times \ell}\mid |&x_i^\top u|<\rho,\quad |\varphi(x_i^\top u)v_j|<\rho',\\
    &|\psi(\varphi(x_i^\top u)v^\top)w_k|<\rho''~\forall (i,j,k)\}.
\end{align*}
Then $0\in M$, $M$ is open, and there exists a nontrivial real analytic function $f:M\to\R$ such that, for all $(X,u,v,W)\in\var(f)^\compl$,
\begin{align*}
    \rank\left(\psi\left(\varphi\left(X^\top u\right)v^\top \right)\sly \phi\left(\psi\left(\varphi\left(X^\top u\right)v^\top \right)W \right)\right)\ge \min\{\ell,\lfloor n/(m-1)\rfloor\}(m-1).
\end{align*}
\end{theorem}
\begin{proof}
Let $f:M\to \R$ be the sum of squares of minors of order $\min\{\ell,\lfloor n/(m-1)\rfloor\}(m-1)$. To see that $f$ is nontrivial, let $(X,u,v,w,z)\in \var(g)^\compl$, where $g$ is the nontrivial real analytic function from Theorem~\ref{thm:narrowrank}, and set $W=wz^\top$.
\end{proof}

We can prove a result about four layer FNNs with first layer width equal to one by applying Lemma~\ref{lma:everything} and Theorem~\ref{thm:narrowrankfull}, but, with one more lemma, we can actually prove a result for general FNNs.

\begin{lemma}
\label{lma:reduce}
Let $L\in\N$ such that $L\ge 3$.
Let $\psi_\ell:\R\to\R$ for each $\ell\in[L]$.
Let $d\in\N$.
Set $m_0=d$. Let $m_1,\ldots,m_L\in\N$.
Let $u_\ell\in\R^{m_\ell}~\forall \ell\in[L-1]$.
Define $W_\ell=[u_\ell\quad]^\top\in\R^{m_{\ell-1}\times m_\ell}~\forall \ell\in[L-1]$.
Let $c_\ell$ be the first entry of $u_\ell$ for each $\ell\in[L-2]$.
Let $W_L\in\R^{m_{L-1}\times m_L}$.
Let $v\in\R^{m_L}$.
Let $X\in\R^{d\times n}$.
Then
\begin{align*}
    F(X,W_1,\ldots,W_L,v) = F(X,c_1e_1^\top,c_2,\ldots,c_{L-2},u_{L-1},W_L,v).
\end{align*}
\end{lemma}
\begin{proof}
First, $\psi_1(W_1^\top X)=\psi_1(u_1e_1^\top X)$. Second, $\psi_2(W_2^\top\psi_1(W_1^\top X)) = \psi_2(u_2\psi_1(c_1e_1^\top X))$. Third, let $\ell\in\{2,\ldots,L-2\}$ and suppose
\begin{align*}
    \psi_\ell\left(W_\ell^\top\cdots\psi_1\left(W_1^\top X\right)\cdots\right)=\psi_\ell\left(u_\ell\psi_{\ell-1}\left(c_{\ell-1}\cdots\psi_1\left(c_1e_1^\top X\right)\cdots\right)\right).
\end{align*}
Then,
\begin{align*}
    \psi_{\ell+1}\left(W_{\ell+1}^\top\cdots\psi_1\left(W_1^\top X\right)\cdots\right)&=\psi_{\ell+1}\left(W_{\ell+1}^\top\psi_\ell\left(u_\ell\psi_{\ell-1}\left(c_{\ell-1}\cdots\psi_1\left(c_1e_1^\top X\right)\cdots\right)\right)\right)\\
    &=\psi_{\ell+1}\left(u_{\ell+1}\psi_\ell\left(c_\ell\cdots\psi_1\left(c_1e_1^\top X\right)\cdots\right)\right).
\end{align*}
So, by induction,
\begin{align*}
    \psi_{L-1}\left(W_{L-1}^\top\cdots\psi_1\left(W_1^\top X\right)\cdots\right)=\psi_{L-1}\left(u_{L-1}\psi_{L-2}\left(c_{L-2}\cdots\psi_1\left(c_1e_1^\top X\right)\cdots\right)\right),
\end{align*}
proving the result.
\end{proof}

Lemma~\ref{lma:reduce} shows how to reduce a general FNN to a FNN with four layers and first layer width equal to one. Now, we are ready to prove our final result.

\begin{theorem}
\label{thm:deep}
Let $L\in\N$ such that $L\ge 3$.
Let $\psi_\ell:\R\to\R$ be real analytic at a point and nontrivial there for each $\ell\in[L-2]$.
Let $\psi_\ell:\R\to\R$ be real analytic at a point and not a polynomial there for each $\ell\in\{L-1,L\}$.
Let $d\in\N$.
Set $m_0=d$. Let $m_1,\ldots,m_L\in\N$.
Let $n\in\N$. Assume $m_L\ge 2\lceil n/(m_{L-1}-1)\rceil$.
Then there exists a nontrivial real analytic function $f:\R^{d\times n}\backslash\{0\}\to\R$ such that, for all $X\in\var(f)^\compl$ and $y\in\R^n$, there exists $W_\ell\in\R^{m_{\ell-1}\times m_\ell}~\forall\ell\in [L]$, $b_\ell\in\R^{m_\ell}~\forall \ell\in[L]$, and $v\in\R^{m_L}$ such that
\begin{align*}
    y^\top=v^\top\psi_L\left(W_L^\top\cdots \psi_1\left(W_1^\top X+b_1\one_n^\top \right)\cdots+b_L\one_n^\top \right).
\end{align*}
\end{theorem}
\begin{proof}
Since the only requirement on $n,(m_\ell)$ is that they satisfy $m_L\ge 2\lceil n/(m_{L-1}-1)\rceil$, we can assume, without loss of generality, that $(m_{L-1}-1)|n$ by including additional generic data.
Set $m_0'=m_0$, $m_\ell'=1~\forall\ell\in[L-2]$, $m_{L-1}'=m_{L-1}$, and $m_L'=\lfloor m_L/2\rfloor$. For each $\ell\in[L-2]$, let $\eta_\ell\in\R$ be a point where $\psi_\ell$ is real analytic and nontrivial. For each $\ell\in\{L-1,L\}$, let $\eta_\ell\in\R$ be a point where $\psi_\ell$ is real analytic and not a polynomial.
By setting $b_\ell=\eta_\ell\one_{m_\ell}~\forall \ell\in[L]$, we can assume, without loss of generality, that $\eta_\ell=0~\forall\ell\in[L]$ and remove the bias vectors. Set $v'=\one_{m_L'}$. Let $W_\ell'\in\R\backslash\{0\}~\forall\ell\in\{2,\ldots,L-2\}$ and define $\varphi=\psi_{L-2}(W_{L-2}'^\top\cdots \psi_2(W_2'^\top\psi_1(u^\top\cdot))\cdots )$. Applying Theorem~\ref{thm:narrowrankfull}, there exists a nontrivial real analytic function $g:M\to\R$ such that, for all $(X,u,z,W)\in\var(g)^\compl$, $\rank(G_{(m_\ell'),n}(X,u,W_2,\ldots,W_{L-2},z^\top,W)=n$. Let $(X',u',z',W')\in\var(g)^\compl$. Using similar steps as in the proof of Theorem~\ref{thm:threelayer}, we can define a nontrivial real analytic function $f:\R^{d\times n}\backslash\{0\}\to\R$ such that, for all $X\in\var(f)^\compl$, $F_{(m_0',\ldots,m_{L-1}',m_L),n}(X,\cdot)$ is surjective by Lemma~\ref{lma:everything}. But, for all $X\in\R^{d\times n}$, $\img (F_{(m_0',\ldots,m_{L-1}',m_L),n}(X,\cdot))\subset \img (F_{(m_\ell),n}(X,\cdot))$ by Lemma~\ref{lma:reduce}. Thus, for all $X\in\var(f)^\compl$, $F_{(m_\ell),n}(X,\cdot)$ is surjective, completing the proof.
\end{proof}

Theorem~\ref{thm:deep} shows that an $L$-layer FNN can interpolate $\Omega(m_{L-1}m_L)$ generic data points, but, in principle, it should be able to interpolate $\Theta(\sum_{\ell=1}^Lm_\ell m_{\ell-1})$ generic data points. These are of the same order when $L=3$ or when the number of neurons is being minimized, as we will show in the next section. But, more generally, to precisely determine the interpolation power of a deep FNN we would have to lower bound the generic rank of the full Jacobian rather than just the Jacobian of the final hidden layer. We leave this as a future research direction. 

\section{Necessary and sufficient number of neurons}
\label{sec:sufficient}

By Theorem~\ref{thm:necessary},
\begin{align*}
    \sqrt{2nd'+(d\lor d'+1)^2-2d\land d'-4L+5}-d\lor d'+d'+L-2
\end{align*}
neurons are necessary for an $(L+1)$-layer FNN to interpolate $n$ generic points in $\R^d\times\R^{d'}$. By Theorem~\ref{thm:deep}, $m_L\ge 2\lceil n/(m_{L-1}-1)\rceil$ is sufficient for an $(L+1)$-layer FNN to interpolate $n$ generic points in $\R^d\times\R$. But the sufficient condition actually leads to the following condition on the number of neurons sufficient to interpolate $n$ generic points in $\R^d\times\R^{d'}$.

\begin{theorem}
\label{thm:sufficient}
Let $n,d,d',L\in\N$ with $L\ge 2$.
Let $\psi_\ell:\R\to\R$ be real analytic at a point and nontrivial there for each $\ell\in[L-2]$.
Let $\psi_\ell:\R\to\R$ be real analytic at a point and not a polynomial there for each $\ell\in\{L-1,L\}$. Then there is a sequence of widths $(m_\ell)$ with less than
\begin{align*}
    2\sqrt{2nd'}+d'+L
\end{align*}
neurons such that an $(L+1)$-layer FNN with activations $(\psi_\ell)$ and widths $(m_\ell)$ can interpolate $n$ generic points in $\R^d\times\R^{d'}$.
\end{theorem}
\begin{proof}
Define $m_\ell=~\forall \ell\in[L-2]$. Define $m_{L-1}=\lceil\sqrt{2nd'}\rceil+1$ and $m_L=\lceil \sqrt{2n/d'}\rceil$. Then $m_L\ge 2\lceil n/(m_{L-1}-1)\rceil$ so we can apply Theorem~\ref{thm:threelayer} or Theorem~\ref{thm:deep} to get that an $(L+1)$-layer FNN with activations $(\psi_\ell)$ and widths $(m_\ell)$ can interpolate $n$ generic points in $\R^d\times\R$. But note that
\begin{align*}
    \begin{bmatrix}
        v_1&&\\&\ddots&\\&&v_{d'}
    \end{bmatrix}^\top A=\begin{bmatrix}
        v_1^\top A&&\\&\ddots&\\&&v_{d'}^\top A
    \end{bmatrix}
\end{align*}
for any matrix $A$. Thus, an $(L+1)$-layer FNN with activations $(\psi_\ell)$ and widths $(m_1,\ldots,m_{L-1},d'm_L)$ can interpolate $n$ generic points in $\R^d\times\R^{d'}$. To complete the proof, note that the number of neurons in $(m_1,\ldots,m_{L-1},d'm_L)$ is $L-2+\lceil\sqrt{2nd'}\rceil+1+\lceil \sqrt{2n/d'}\rceil d'<2\sqrt{2nd'}+d'+L$.
\end{proof}

To compare the necessary and sufficient conditions, assume $d,d'=o(n)$ and $L=o(\sqrt{n})$. Then the necessary number of neurons is $\sqrt{2nd'}+\Omega(1)$ and the sufficient number of neurons is $2\sqrt{2nd'}+\Omega(1)$.

\section{Conclusion}

We showed that for feedforward neural networks with at least three layers mapping from $\R^d$ to $\R^{d'}$, $\sqrt{2nd'}+\Omega(1)$ neurons are necessary to interpolate $n$ generic data points and $2\sqrt{2nd'}+\Omega(1)$ neurons are sufficient. The most technical part of the proof was showing that the Jacobian with respect to the final hidden layer has close to full generic rank. From there, we applied the Constant Rank Theorem to prove the existence of an interpolating solution. While the final hidden layer has the largest share of parameters in a three layer network, this is not necessarily the case for a deep network. Thus, it is a future research direction to construct the interpolating solution with respect to the full Jacobian and so prove an optimal sufficient condition on the number of parameters needed for interpolation.

\bibliographystyle{main}
\bibliography{main}

\end{document}